\def\eqref#1{equation~\ref{#1}}
\def\1{\bm{1}}
\DeclareMathAlphabet{\mathsfit}{\encodingdefault}{\sfdefault}{m}{sl}
\SetMathAlphabet{\mathsfit}{bold}{\encodingdefault}{\sfdefault}{bx}{n}
\newcolumntype{M}[1]{>{\centering\arraybackslash}m{#1}}
\newcommand*{\defeq}{\mathrel{\vcenter{\baselineskip0.5ex \lineskiplimit0pt
			\hbox{\scriptsize.}\hbox{\scriptsize.}}}%
	=}
\begin{document}
	\pagestyle{headings}
	\mainmatter

	\title{Group Pruning using a Bounded-\(\ell_p\) norm for Group Gating and Regularization}
	
	\titlerunning{Group Pruning using Bounded-\(\ell_p\) norm for Group Gating and Regularization}
	\authorrunning{C. K. Mummadi et al.}
	\author{
		Chaithanya Kumar Mummadi\inst{1,2}\orcidID{0000-0002-1173-2720} \and
		Tim Genewein\inst{1}\orcidID{0000-0001-8039-4027}\thanks{Currently at DeepMind} \and
		Dan Zhang\inst{1}\orcidID{0000-0003-0930-9162} \and
		Thomas Brox\inst{2}\orcidID{0000-0002-6282-8861} \and
		Volker Fischer\inst{1}\orcidID{0000-0001-5437-4030} 
	}
	
	\institute{
		$^1$ Bosch Center for Artificial Intelligence, Robert Bosch GmbH, Germany \\
		$^2$ University of Freiburg, Germany
	}

	\maketitle

	\begin{abstract}
    Deep neural networks achieve state-of-the-art results on several tasks while increasing in complexity.
	It has been shown that neural networks can be pruned during training by imposing sparsity inducing regularizers. 
	In this paper, we investigate two techniques for group-wise pruning during training in order to improve network efficiency.
	We propose a gating factor after every convolutional layer to induce channel level sparsity, encouraging insignificant channels to become exactly zero.
	Further, we introduce and analyse a bounded variant of the \(\ell_1\) regularizer, which interpolates between \(\ell_1\) and \(\ell_0\)-norms to retain performance of the network at higher pruning rates.
	To underline effectiveness of the proposed methods,
	we show that the number of parameters of ResNet-164, DenseNet-40 and MobileNetV2 can be reduced down by \(30\%\), \(69\%\), and \(75\%\) on CIFAR100 respectively without a significant drop in accuracy. 
	We achieve state-of-the-art pruning results for ResNet-50 with higher accuracy on ImageNet. 
	Furthermore, we show that the light weight MobileNetV2 can further be compressed on ImageNet without a significant drop in performance.
\end{abstract}

\section{Introduction}
\label{sec:introduction}

Modern deep neural networks are notoriously known for requiring large computational resources, which becomes particularly problematic in resource-constrained domains, such as in automotive, mobile or embedded applications. 
Neural network \emph{compression} methods aim at reducing the computational footprint of a neural network while preserving 
task performance (e.g. classification accuracy) \cite{cheng2017survey,sze2017efficient}.  
One family of such methods, \emph{Network pruning}, operates by removing unnecessary weights or even whole neurons or convolutional featuremaps (``channels'') during or after training, thus reducing computational resources needed at test time or deployment. %
A simple relevance-criterion for pruning weights is weight-magnitude: ``small'' weights contribute relatively little to the overall computation (dot-products and convolutions) and can thus be removed.

However, weight-pruning leads to unstructured sparsity in weight matrices and filters. While alleviating storage demands, it is non-trivial to exploit unstructured sparsity for reducing computational burden during forward-pass operation.  
This effect becomes even more pronounced on today's standard hardware for neural network computation (GPUs), which is typically designed for massively parallel operation. %
In contrast to individual-weight pruning, neuron- and featuremap-pruning allows dropping whole slices of weight matrices or tensors, which straightforwardly leads to a reduction of forward-pass FLOPS, energy consumption as well as on- and off-line memory requirements. However, it is more intricate to determine the relevance of whole neurons/featuremaps than that of weights.

In this paper, we propose and evaluate a method for \emph{group-wise} pruning. A group typically refers to all weights that correspond to a neuron or convolutional filter, but could in principle also be chosen to correspond to
different sub-structures such as larger parts of a layer or even whole blocks/layers in architectures with skip-connections. The central idea of our method is the addition of a ``trainable gate'', that
is a \emph{parameterized, multiplicative factor}, per group. During training, the gate-parameter is learned for each gate individually, allowing the network to learn the relevance of each neuron/featuremap.
After training, groups of low relevance can be straightforwardly identified and pruned without significant loss in accuracy. The resulting highly structured sparsity patterns can be readily used to reduce the size of
weight-matrices or -tensors.
An important aspect of our method is that we use a sparsity-inducing regularizer during training to force a maximally large number of gates towards zero. We empirically compare different choices for this sparsity-inducing regularizer and in addition to previously proposed \(\ell_1\) or \(\ell_2\) norms, we propose and evaluate a smoothened version of the \(\ell_0\) norm (which can also be viewed as a saturating version of an \(\ell_p\) norm). The latter allows for a certain decoupling of parameter-importance and parameter-magnitude, which is in contrast to standard regularizers that penalize parameters of large magnitude regardless of their importance.

\begin{itemize}
	\item We investigate the effect of group pruning using bounded \(\ell_p\) norms for group gating and regularization on different network architectures (LeNet5, DenseNet, ResNet and MobileNetV2) and data-sets (MNIST, CIFAR100, 
	ImageNet) achieving comparable or superior compression and accuracy.
	\item We show that our gating function drives the gating factors to become exactly zero for the insignificant channels during training.
	\item Applying \(\ell_2\) regularizer on our gating parameters, rather than on weights, leads to significant pruning for ResNet and DenseNet without a drop in accuracy and further improves the accuracy of MobileNetV2 on both CIFAR100 and ImageNet.
	\item We also propose a bounded variant of the common \(\ell_1\) regularizer to achieve higher pruning rates and retain generalization performance.
\end{itemize}

	\section{Related work}
\label{sec:existing_work}

\textbf{Neural Network compression}. 
Most approaches in the literature resort to \emph{quantization} and/or \emph{pruning}. 
In this context, quantization refers to the reduction of required bit-precision of computation --- either of weights only \cite{chen2015compressing,gong2014compressing,han2016deep,zhou2017incremental} or both weights and activations \cite{courbariaux2016binarized,rastegari2016xnor,hubara2017quantized,gysel2018ristretto,wu2018training}.
Network pruning attempts to reduce the number of model parameters and is often performed in a single step after training, but some variants also perform gradual pruning during training \cite{frankle2018lottery,han2016deep,han2015learning} or even prune and re-introduce weights in a dynamic process throughout training \cite{han2017dsd,guo2016dynamic}. In contrast to individual weight pruning \cite{han2016deep}, group-pruning methods (pruning entire neurons or feature-maps that involve groups of weights) lead to highly structured sparsity patterns which easily translate into on-chip benefits during a forward-pass \cite{wen2016learning,zhou2016less,alvarez2016learning}.

Pruning and quantization can also be combined \cite{han2016deep,ullrich2017soft,federici2017improved,achterhold2018variational}. Additionally, the number of weights can be reduced \emph{before} training by architectural choices as in SqueezeNet \cite{iandola2016squeezenet} or MobileNets \cite{howard2017mobilenets}. As we show in our experiments, even parameter-optimized architectures such as MobileNets can still benefit from post-training pruning.

\textbf{Relevance determination and sparsity-inducing regularization}.
Many pruning methods evaluate the relevance of each unit (weight, neuron or featuremap) and remove units that do not pass a certain relevance-threshold \cite{han2016deep,guo2016dynamic,han2017dsd,liu2017learning}. Importantly, optimizing the relevance-criterion that is later used for pruning thus becomes a secondary objective of the training process --- in this case via weight-magnitude regularization.  
An undesirable side-effect of \(\ell_1\)- or \(\ell_2\)-weight-decay \cite{hanson1989comparing} when used for inducing sparsity is that important, non-pruned weights still get penalized depending on their magnitude, leading to an entanglement of parameter-importance and magnitude. An ideal sparsity-inducing regularizer would act in an (approximately) binary fashion, similar to how the \(\ell_0\) norm simply counts number of non-zero parameters, but is not affected by the magnitude of the non-zero parameters.
The problem of determining the relevance of model parameters has also been phrased in a Bayesian fashion via \emph{automatic relevance determination} (ARD) and sparse Bayesian learning \cite{karaletsos2015automatic,mackay1995probable,nealbayesian}, which has recently been successfully applied to weight-pruning \cite{molchanov2017variational}, weight ternarization \cite{achterhold2018variational} and neuron-/featuremap-pruning by enforcing group-sparsity constraints \cite{ghosh2018structured,louizos2017bayesian,federici2017improved,neklyudov2017structured}. These methods require (variational) inference over the parameter posterior instead of standard training.

\textbf{Neuron-/featuremap-pruning}.
Determining the importance of neurons or feature maps is non-trivial \cite{wen2016learning,zhou2016less,alvarez2016learning}. Approaches are based on thresholding the norm of convolutional kernels or evaluating activation-statistics. However, both approaches come with certain caveats and shortcomings \cite{molchanov2016pruning,ye2018rethinking}. Some methods try to explicitly remove neurons that do not have much impact on the final network prediction \cite{hu2016network,li2016pruning,molchanov2016pruning}. Other methods propose a more complex optimization procedure with intermediate pruning steps and fine-tuning \cite{he2017channel,luo2017thinet}, such that the non-pruned network can gradually adjust to the missing units.

Our approach is closely related to \cite{louizos2017learning}, who also use trainable, multiplicative gates for neuron-/featuremap-pruning. However, in their formulation gates Bernoulli random variables. Accordingly, learning of their gate parameters is done via (variational) Bayesian inference. In contrast, our method allows network training in a standard-fashion (with an additional regularizer term) without requiring sampling of gate parameters, or computing expected gradients across such samples.
Other closely related works are \cite{liu2017learning,ye2018rethinking}, who induce sparsity on the multiplicative scaling factor $\gamma$ of Batch Normalization layers and later use the magnitude of these factors for pruning channels/featuremaps. Similarly, \cite{huang2017data} use a trainable, linear scaling factor on neurons / featuremaps with an \(\ell_1\)-norm sparsity-inducing regularizer.
We perform experiments to directly compare our method against all the above closely related works.
Additionally, we reimplement the technique proposed by \cite{liu2017learning} and treat it as a baseline to compare our results against it in all experiments.

	\section{Bounded-\(\ell_{p,0}\) norm}
\label{sec:theory}

The \(p\)-norm (a.k.a. \(\ell_p\)-norm) and \(0\)-norm of a vector \(x \in \mathbb{R}^n\) of dimension $n$ are respectively defined as:
\vskip -0.4 cm
\begin{equation}\label{def:norms}
	\|x\|_p \defeq \left(\sum\limits_{i = 1}^{n} |x_i|^p\right)^{1/p}\ \ \ \ \ \ \ \ 
	\|x\|_0 \defeq \sum\limits_{i=1}^{n}(1 - \textbf{1}_0(x_i)).
\end{equation}
Here, \(\textbf{1}_{a}(b)\) being the function which is one iff \(a = b\) and zero otherwise.
While the \(p\)-norms constitute norms in the mathematical sense, the \(0\)-norm (a.k.a. \textit{discrete metric}), does not due to the violation of the triangle inequality. 

It is constant almost everywhere and hence gradient based optimization techniques are unusable. %
We use a differentiable function adapted from~\cite{Weston2003}, which around \(0\) interpolates, controlled by a parameter \(\sigma > 0\), between the \(p\)- and \(0\)-norm: 

\begin{definition}
For \(\sigma > 0\), we call the mapping \(\|.\|_{\text{bound-}p,\sigma}: \mathbb{R}^{n}\rightarrow \mathbb{R}_{+}\) with
\vskip -0.3 cm
\begin{equation}\label{def:boundednorm}
\|x\|_{\text{bound-}p,\sigma} \defeq \sum\limits_{i=1}^{n} 1 - \text{exp}\left(-\frac{|x_i|^p}{\sigma^p}\right)
\end{equation}
the \textbf{bounded-\(\ell_{p,0}\) norm} or \textbf{bounded-\(\ell_{p}\) norm}.
Fig. \ref{fig:figure_illu_norms} illustrates the bounded-\(\ell_{p,0}\) norm with \(p=1,2\) and different $\sigma$. One sees that \(\|x\|_{\text{bound-}p,\sigma}\) is bounded to \([0,n)\) and differentiable everywhere except \(x_i = 0\) for one or more coefficients of \(x\). 
Further, in contrast to the \(0\)-norm, it has a non-zero gradient almost everywhere.
\end{definition}

\begin{lemma}\label{lemma1}
The bounded-\(\ell_{p,0}\) norm has the following properties:
\begin{itemize}[itemsep=-0.5em,topsep=0pt]
	\item For \(\sigma \rightarrow 0^{+}\) the bounded-norm converges towards the \(0\)-norm:

	\begin{equation}\label{lemma1:is0norm}
	\lim\limits_{\sigma \rightarrow 0^{+}} \|x\|_{\text{bound-}p,\sigma} = \|x\|_{0}.
	\end{equation}
	\item In case \(|x_i| \approx 0\) for all coefficients of \(x\), the bounded-norm of \(x\) is approximately equal to the \(p\)-norm of \(x\) weighted by \(1/\sigma\):

	\begin{equation}\label{lemma1:ispnorm}
	\|x\|_{\text{bound-}p,\sigma} \approx \left\lVert\frac{x}{\sigma}\right\rVert_{p}^{p}
	\end{equation}
\end{itemize}
\end{lemma}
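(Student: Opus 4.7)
The plan is to treat the two assertions independently, since they address the two extreme regimes of the argument of the exponential.

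For the first bullet, I would work coefficient-wise. Since the bounded-norm is a finite sum, $\lim_{\sigma\to 0^+}\|x\|_{\text{bound-}p,\sigma}=\sum_{i=1}^{n}\lim_{\sigma\to 0^+}\bigl(1-\exp(-|x_i|^p/\sigma^p)\bigr)$, so the task reduces to analysing each summand. I would split into two cases: if $x_i=0$, then $|x_i|^p/\sigma^p=0$ for every $\sigma>0$, so the summand is identically $1-\exp(0)=0$; if $x_i\neq 0$, then $|x_i|^p/\sigma^p\to+\infty$ as $\sigma\to 0^+$, hence $\exp(-|x_i|^p/\sigma^p)\to 0$ and the summand tends to $1$. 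Summing, the limit equals the number of nonzero coefficients of $x$, which by the definition in \eqref{def:norms} is precisely $\|x\|_0$.

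For the second bullet, the key tool is the first-order Taylor expansion $1-e^{-y}=y-\tfrac{y^2}{2}+O(y^3)$ valid for $y$ near $0$. The qualitative hypothesis ``$|x_i|\approx 0$ for all $i$'' I would interpret quantitatively as $|x_i|/\sigma\ll 1$, so that $y_i:=|x_i|^p/\sigma^p$ is small. Applying the expansion coefficient-wise,
\begin{equation*}
\|x\|_{\text{bound-}p,\sigma}=\sum_{i=1}^{n}\Bigl(\frac{|x_i|^p}{\sigma^p}+O\!\bigl((|x_i|/\sigma)^{2p}\bigr)\Bigr)=\left\lVert\frac{x}{\sigma}\right\rVert_{p}^{p}+O\!\Bigl(\max_i(|x_i|/\sigma)^{p}\cdot\left\lVert\tfrac{x}{\sigma}\right\rVert_{p}^{p}\Bigr),
\end{equation*}
and the higher-order term is negligible compared to the leading term under the hypothesis, giving the stated approximation \eqref{lemma1:ispnorm}.

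I do not anticipate a real obstacle: both parts are essentially pointwise analytical facts about the scalar function $\sigma\mapsto 1-\exp(-|t|^p/\sigma^p)$. The only subtlety worth flagging is that the second statement, as written with ``$\approx$'', is informal; the cleanest way to present it is to exhibit the leading-order Taylor term explicitly and state that the remainder is of strictly smaller order in the regime $|x_i|/\sigma\to 0$, rather than attempting a uniform quantitative bound.
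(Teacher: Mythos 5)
Your proposal is correct and follows essentially the same route as the paper's proof: a pointwise limit of $\exp(-|x_i|^p/\sigma^p)$ toward the indicator $\mathbf{1}_0(x_i)$ for the first bullet, and a first-order Taylor expansion of the exponential for the second. Your version is slightly more explicit (the case split on $x_i=0$ versus $x_i\neq 0$, and the quantified remainder term), but the underlying argument is identical.
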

\begin{proof}
See \Cref{proof:lemma1} for proof.
\end{proof}

	\section{Methodology}
\label{sec:Methodology} 
With the use of the bounded-\(\ell_{p}\) norm introduced in the previous section, we subsequently present a simple and straightforward technique to perform \emph{group wise pruning} in deep CNNs. Here, \emph{group} is referred to as a set of weights, e.g., a filter in a convolutional layer associated to a feature map or, in case of a fully connected layer, a single target neuron. 
\begin{figure}
\begin{center}
\includegraphics[width=0.7\linewidth, height=0.15\textheight]{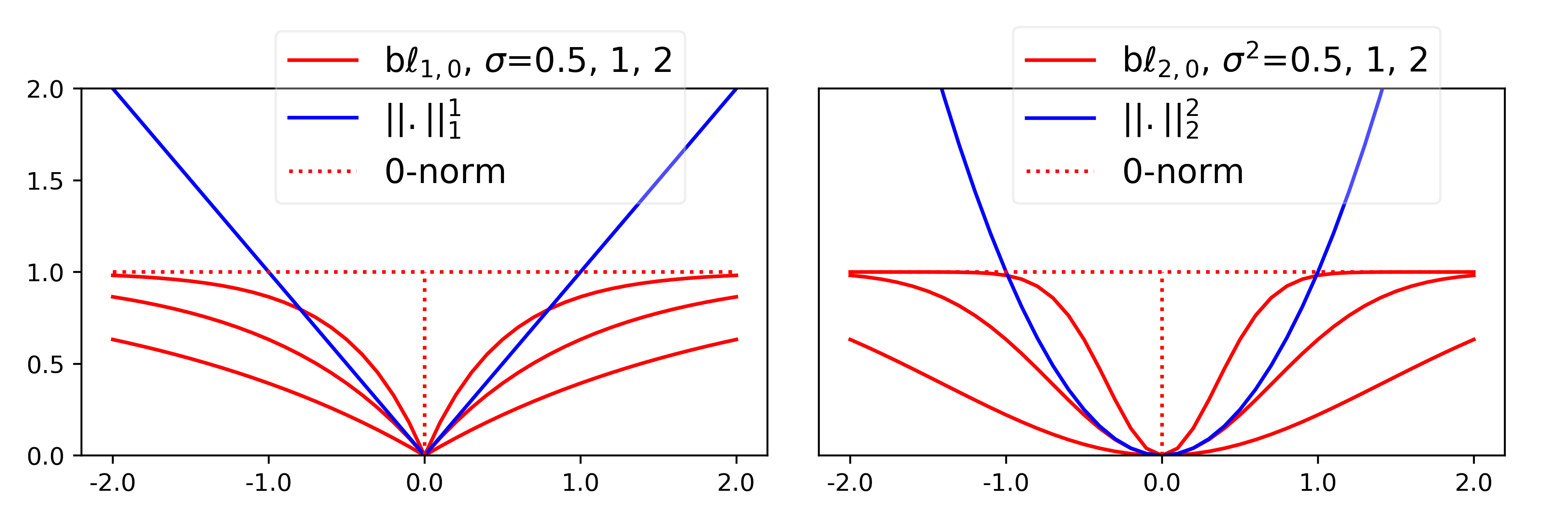}

\caption{Illustration of bounded-\(\ell_{p,0}\) (b\(\ell_{p,0}\)) norms with $p\in\{1,2\}$: Interpolation from \(\ell_1\)-norm to \(0\)-norm (left) and from \(\ell_2\)-norm to \(0\)-norm (right) with different $\sigma$.}

\label{fig:figure_illu_norms}
\end{center}
\end{figure}

\textbf{Bounded-\(\ell_1\) regularizer}: It is a common practice to use sparsity inducing \(\ell_1\) penalty to shrink parameters during training. ~\cite{liu2017learning} has performed channel-wise pruning by imposing \(\ell_1\) penalty on the scaling factor \(\gamma\) of Batch Normalization (BN) layers that correspond to featuremaps in convolutional layers. We denote these scaling factors as \emph{linear gates} in this work. Thus, the total loss $L$ consists of the standard empirical loss $l$ and an additional \(\ell_1\) penalty on the linear gates:
\begin{equation}\label{def:loss_l1_norm}
  \mathcal{L} = \sum_{(x,y)} l\left(f\left(x, W\right), y\right) + \lambda \sum_{\gamma \in G} \left|\gamma\right|
\end{equation}
where \(f\) denotes the deep neural network, \(x,y\) denote training input and target, \(W\) denotes the network weights, \(\gamma\) denotes a single scaling factor from the aggregated set of all linear gates $G$. The \(\ell_1\) regularizer acts upon all linear gates and pushes them towards zero. The channels with linear gates whose magnitude is less than the relevance threshold are then pruned to obtain a narrow network. Here, the linear gates should accomplish two different tasks i) get optimized along the other network parameters to improve the network performance and ii) shrink down towards zero to induce channel level sparsity in the network. The hyperparameter \(\lambda\) defines the strength of the regularizer and controls the trade-off between primary objective and \(\ell_1\) penalty. Increasing \(\lambda\) would yield higher pruning rates at the cost of reduced network performance. The \(\ell_1\) regularizer penalizes each parameter at a same rate
irrespective of its role and importance in accomplishing the primary objective. In general, not all parameters should receive equal penalty. We address this issue by employing a norm as defined in \Cref{def:boundednorm} as a sparsity inducing regularizer with \(p=1\) and denote it as bounded-\(\ell_{1}\) regularizer as it is bounded to \([0,1]\). 

Figure \ref{fig:figure_illu_norms} (left) shows that the bounded-\(\ell_1\) norm is a variant of the normal \(\ell_1\) norm and both penalize larger parameters. 
Importantly for the bounded variant, the penalty on larger weights does not increase as strong as for the normal norm, and only smaller weights are penalized comparably. Larger parameters, for which the bounded variant saturates, become primarily subject to the task loss.
In other words, for the bounded variant, the penalty for large parameters becomes decoupled from the size of the parameters and converges to a constant value whereas for small parameters the penalty is relative large and forces them to even smaller values.
Similar to the \(\ell_{1}\) penalty, the bounded \(\ell_{1}\) norm can be added as a regularization term in the objective function.

\begin{equation}
\label{eq:bounded_l1_reg}
  \mathcal{L}^\ast = \sum_{(x,y)} l\left(f\left(x, W\right), y\right) + \lambda \sum_{\gamma \in G} \left[1 - e^{-\frac{\left|\gamma\right|}{\sigma}}\right]
\end{equation}
The gradient of the parameter \(\gamma\) w.r.t. \(\ell_1\) and bounded-\(\ell_{1}\) regularization equals:

\begin{equation}
  \frac{\partial\mathcal{L}_\mathrm{reg}}{\partial \gamma} = \lambda \cdot \mathrm{sign}\left(\gamma\right), \hspace{0.4cm}
  \frac{\partial\mathcal{L}_\mathrm{reg}^\ast}{\partial \gamma} = \lambda \cdot \mathrm{sign}\left(\gamma\right)\frac{e^{-\frac{\left|\gamma\right|}{\sigma}}}{\sigma}
\end{equation}
The above equations indicate that the \(\ell_1\) norm updates gradients at a scale of \(\lambda\) irrespective of their magnitude. On the other hand, bounded-\(\ell_1\) norm provides no or small gradients for parameters with higher magnitude and large gradients for smaller parameters. In this manner, parameters with larger values receive gradients mainly from the first part of $\mathcal{L}^\ast$, being informative to accomplish the primary classification task. 

Another interesting property of such norm is: The hyperparameter \(\sigma\) scales the regularization strength by controlling the interpolation between the \(\ell_1\)- and \(0\)-norm. As \(\sigma\) gets smaller, the bounded-\(\ell_1\) norm converges to the \(0\)-norm according to \Cref{lemma1}. Larger \(\sigma\) allows regularization of all parameters whereas smaller \(\sigma\) guides the regularizer to penalize only parameters of smaller magnitude while liberating the larger ones. %
Larger values of \(\sigma\) enforce weaker regularization and smaller values enforce stronger regularization (also compare Fig. \ref{fig:figure_illu_norms}). 

Given the behavior of \(\sigma\), we can schedule it by gradually reducing its value during training.  
In doing so, the norm initially regularizes a larger number of parameters and then gradually shrinks down the insignificant ones towards zero while simultaneously filtering out the important ones. We can imagine the scheduling of \(\sigma\) as opening the gates of the \(0\)-norm to make it differentiable which allows the insignificant parameters to fall into the valley of the norm and gradually close the gates to leave out the important parameters. It is fairly straightforward to include the hyperparameter \(\sigma\) also in the case of the \(\ell_1\) norm by replacing \(\left|\gamma\right|\) with \(\frac{\left|\gamma\right|}{\sigma}\) in \Cref{def:loss_l1_norm} but it is similar to scaling the hyperparameter \(\lambda\) to \(\frac{\lambda}{\sigma}\) in this case. The scheduling of \(\sigma\) in \(\ell_{1}\)-norm
increases its regularization strength and pushes down all the parameters towards zero which affects task performance of the network.

\textbf{Bounded-\(\ell_2\) for group gating}: Both the \(\ell_1\) and bounded-\(\ell_{1}\) regularizers bring down the scalar parameters towards zero but never make them exactly zeros (refer Figure \ref{fig:figure_gate_comparison}). This limitation always demands the setting of a relevance threshold to prune the parameters
and then later requires fine-tuning for a number of iterations to stabilize the task performance of the pruned network. To this end, we propose to use the same bounded-\(\ell_p\) norm that is defined in \Cref{def:boundednorm} as an additional layer in the network with \(p=2\) and \(\sigma=1\). To this, we refer to as a gating layer of \emph{exponential gates} (with gating parameters \(g\)) which is placed after every convolutional or fully connected layer or before a BN layer in the network. This layer serves as a multiplicative gating factor for every channel in the preceding convolutional layer. The gating layer has the same number of gates as the number of channels where each gate gets multiplied to an output channel of a convolutional layer. 

\begin{equation}
  x = \mathrm{conv}\left(\emph{input}\right);\quad y_k = x_k \cdot \left(1 - e^{-g^2_k}\right)
\end{equation}
where \(x\) and \(y\) are the output of the convolutional and gating layer respectively and \(k\) indexes the channel of the convolutional layer. %
Since the gates are added as a layer in the network, we train the gating parameters \(g\) together with the network weights \(W\). In contrast to the \emph{linear gates} \(\gamma\) of BN, we impose the penalty only on the parameters \(g\) of \emph{exponential gates}, yielding the loss function: 
\begin{equation}
  \mathcal{L} = \sum_{(x,y)} l\left(f\left(x, g, W\right), y\right) + \lambda \sum_{g \in G} R\left(g\right)
\end{equation}
The first part of the loss function corresponds to the standard empirical loss of the neural network and \(R(.)\) is the penalty term on the gating parameters \(g\) which could be either \(\ell_2\), \(\ell_1\), or the bounded-\(\ell_{1}\) regularizer. Two interesting properties of the \emph{exponential gates} which makes them distinctive from the \emph{linear gates} are
i) its values are bounded to the range \([0,1)\),
ii) the quadratic exponential nature of the gates fused with the regularizer shrink down the outcome of the gates towards zero rapidly.
 
The regularized \emph{exponential gates} which are jointly optimized with the network weights act as a channel selection layer in the network. These gates actively differentiate the insignificant channels from significant ones during the training phase and gradually turn them off without affecting the network's performance.
In Section \ref{sec:Results}, we empirically show that these exponential gating layers assist the regularizers to drive the insignificant channels to become exactly zero and later compress the network after removing such channels.

The \emph{exponential gating layer} can be added to the network with or without BN. 
In case the gating layer is followed by BN, the statistics from the nulled-out channels remains constant across all the mini-batches since the gate is deterministic and gets multiplied to every input sample. Thus, both the running estimates of its computed mean and variance of the BN is zero for the nulled-out channels. The multiplicative scaling factor \(\gamma\) of BN does not show any effect on those channels but its additive bias \(\beta\) might change the zero channels to non-zero. This can be seen as adding a constant to the zero channels which can be easily alleviated by few iterations of fine-tuning the pruned network. In case the gating layers are added to a CNN without BN, we can prune channels in the network without any need of explicit fine-tuning since the insignificant channels become exactly zero after getting multiplied with the gates during the training phase. 
As a final note, the additional \emph{exponential gating layer} increases the number of trainable parameters in the network but these gates can be merged into the weights of the associated convolutional filter after pruning.

In next section, we empirically evaluate the above-proposed techniques to achieve channel level sparsity, namely, i) bounded-\(\ell_{1}\) norm to prune a larger number of parameters and preserve the task accuracy, and ii) additional gating layer in CNNs to support the regularizers to achieve exactly zero channels.

	\section{Experimental Results}
\label{sec:Results}

We demonstrate the significance of both, the \emph{exponential gating layer} and the \emph{bounded-\(\ell_1\)} regularizer, on different network architectures and datasets, i.e., LeNet5-Caffe on MNIST, DenseNet-40, ResNet-164, MobileNetV2 on CIFAR100 and ResNet-50, MobileNetV2 on ImageNet dataset. We refer to Sec.~\ref{sec:experiments} for the experiment details such as data preprocessing, architecture configuration, and hyperparameter selection.
We use the threshold point $10^{-4}$ on the linear gates and threshold zero on the exponential gates to prune the channels.

\textbf{CIFAR100} The results are summarized in Figure \ref{fig:figure_full_comparison}. We compare the trade-off between classification accuracy on test data against the pruning rates obtained from different regularizers and gates.
We report the average results over 3 different runs.
Here, \(\sigma_{constant}\) refers to the hyperparameter \(\sigma\) that is set to a constant value throughout the training process. We also investigated the influence of scheduling \(\sigma\) in case of bounded-\(\ell_1\) regularizer and compared the results against scheduling \(\sigma\) in \(\ell_1\) regularizer.

\begin{figure}[t!]

		\includegraphics[width=1.0\linewidth, height=0.195\textheight]{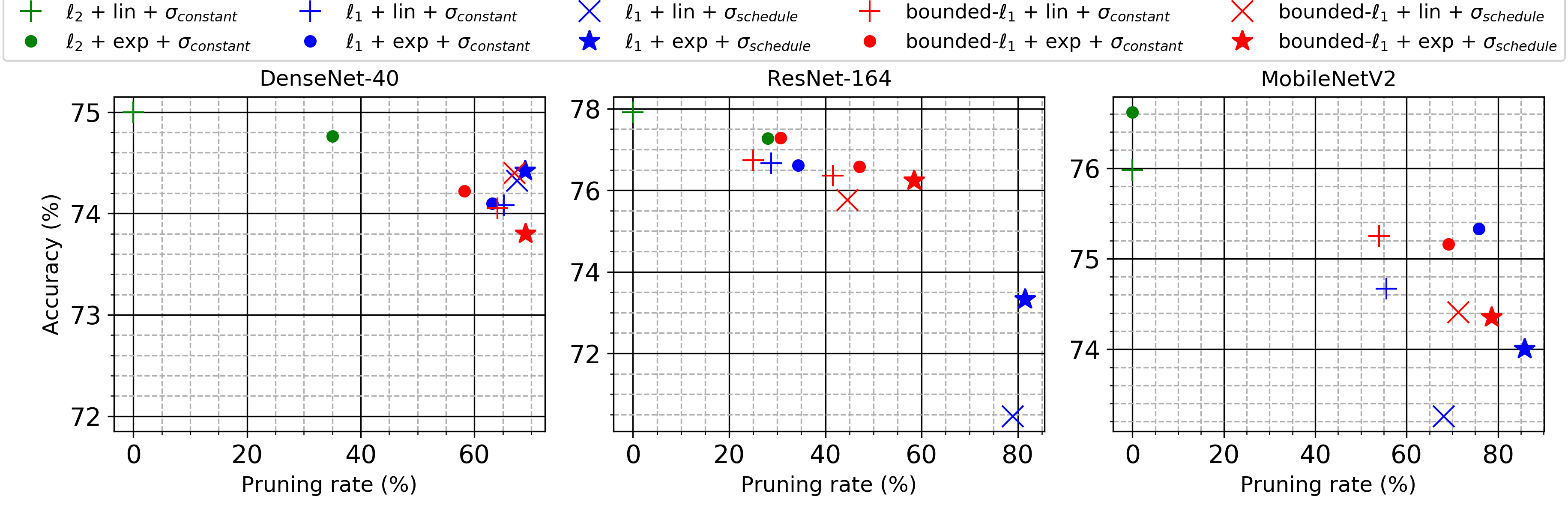}

		\caption{Comparing trade-off between pruning rates and accuracies of different regularizers \(\ell_2\), \(\ell_1\) and bounded-\(\ell_{1}\) with different gates (linear, exponential) at constant and scheduled $\sigma$ on DenseNet-40, ResNet-164 and MobileNetV2 on CIFAR100. In DenseNet-40, the scheduled \(\ell_{1}\) regularizer on exponential gate achieves slightly higher pruning and accuracy rate than the other methods. In ResNet-164, two identical markers represent settings with different regularization strengths. Here, bounded-\(\ell_1\) on exponential gate achieves higher pruning rates with approximately same line of accuracy with other methods. In MobileNetV2, bounded-\(\ell_1\) on linear gate outperforms \(\ell_1\) on linear gate in terms of accuracy with approximately similar pruning rate for both the cases of $\sigma$ (constant and scheduled). However, \(\ell_1\) on exponential gate with constant $\sigma$ preserves the accuracy with higher pruning rate. Thus, the networks with exponential gating layers has higher pruning rates than the linear gates with the accuracy close to baseline. On the other hand, bounded-\(\ell_1\) improves accuracy on linear gates in MobileNetV2 and on both gates in ResNet-164 when compared with \(\ell_1\) regularizer.
		}
		\label{fig:figure_full_comparison}
\end{figure}
\begin{figure}[t!]
	\begin{center}
	\includegraphics[width=0.95\linewidth, height=0.4\textheight]{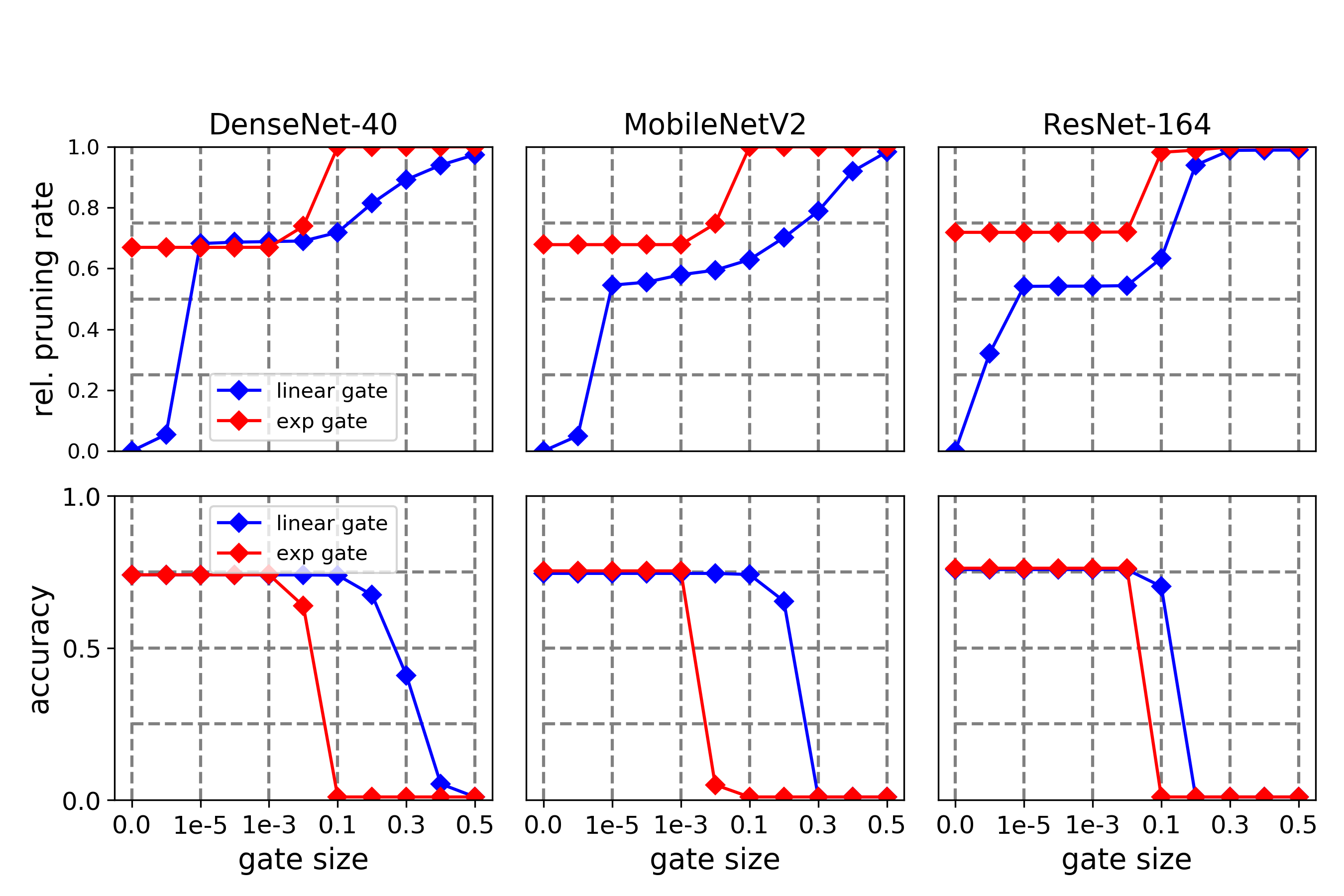}

	\caption{Comparison of pruning rates (top row) and accuracies (bottom row)  on CIFAR100 over different threshold points between linear gate (blue) and the exponential gate (red), both of which are applied in combination with \(\ell_{1}\) regularization. Units that do not pass the threshold on the gate values $|g|$ for the linear gates and $(1 - e^{-g^2})$ for exponential gates are pruned.  Here, pruning rate of networks with exponential gates is superior or comparable to the linear gate at different threshold values. With the exponential gates, the achieved best pruning rates are insensitive to the selection of the threshold within the range $[0,10^{-3}]$. In particular, the threshold zero being (nearly) optimum indicates that the exponential gates can exactly zeroing out removable channels. This observation also holds when combining with the \(\ell_{2}\) and bounded-\(\ell_{1}\) regularizers.}

	\label{fig:figure_gate_comparison}
\end{center}
\end{figure}

From Figure \ref{fig:figure_full_comparison}, it can be seen that the
bounded-\(\ell_{1}\) regularizer on the linear gates results in a higher pruning rate with an accuracy comparable to the \(\ell_{1}\) regularizer in ResNet-164 and provides a higher accuracy than the \(\ell_{1}\) regularizer in MobileNetV2. 
On the other hand, the addition of exponential gating layers in ResNet-164 and MobileNetV2 greatly increases the pruning rates and accuracy upon the linear gate. The bounded-\(\ell_1\) regularizer further improves the accuracy of ResNet-164 with exponential gating layer to 77.28\% and 76.58\% at different regularization strengths with pruning rates 30.73\% and 47\% respectively.
In case of MobileNetV2, \(\ell_1\) on exponential gating layer results pruning rate of 75.83\% with an accuracy 75.33\%.

In contrast to the other networks, the pruning results of bounded-\(\ell_1\) regularizer and exponential gating layer in DenseNet-40 are identical to the results of the \(\ell_1\) regularizer on linear gate. However, the addition of exponential gating layer when combined with the \(\ell_2\) regularizer encourages channel pruning with a marginal drop in performance in both ResNet-164 and DenseNet-40 architectures, whereas the gate improves the classification performance in case of MobileNetV2.
We can also observe that scheduling \(\sigma\) for the \(\ell_1\) regularizer significantly drops the accuracy and increases the pruning rate in both MobileNetV2 and ResNet-164. Scheduling in the bounded-\(\ell_1\) regularizer also increases the pruning rate while retaining the accuracy close to the baseline margin. In MobilenetV2, scheduling the regularizer in bounded-\(\ell_1\) yields higher accuracy and pruning rate on linear gate when compared to the scheduled \(\ell_1\) regularizer. In ResNet-164, the pruning rate raises from 47\% to 58.5\% with an accuracy drop from 76.58\% to 76.23\% in case of exponential gate with scheduled bounded-\(\ell_1\) regularizer.
On the other hand, the impact of the scheduler remains comparable, for both the regularizers in DenseNet-40 and scheduling \(\ell_1\) regularizer on exponential gate increases the pruning rate to 69\% with 74.42\% accuracy.

We compare pruning rates between linear and exponential gates and their accuracy trade-off at different threshold points in Figure \ref{fig:figure_gate_comparison}. We prune channels with gate values less or equal to the threshold and further fine-tune the network for a maximum of three epochs. Across the three different architectures both gates maintain the same accuracy until a critical threshold. The pruning rate of the exponential gates are significantly larger than the linear gates in ResNet-164, MobileNetV2 and comparable in DenseNet-40. In particular, the magnitude of non-zero exponential gates lies in \([10^{-3},0.1]\) and pruning at the threshold larger than $10^{-3}$ removes all channels in the network. Below $10^{-3}$ the exponential gates achieve the optimum performance, i.e., largest pruning rate without loss of the classification accuracy. It is noted that the threshold zero is attainable, indicating that exponential gates can exactly null out removable channels. On contrary, the linear ones gate them with a sufficiently small value (about $10^{-5}$ in the case of Figure \ref{fig:figure_gate_comparison}), thereby necessitating the search of a precise pruning threshold.

\textbf{MNIST} We also test our method on the MNIST dataset using the \emph{LeNet5-Caffe} model. We compare our results with \(\ell_{0}\) regularization from \cite{louizos2017learning}. 
We present different models that are obtained from different regularizers and the weight decay is set to be zero when using the \(\ell_{1}\) or bounded-\(\ell_{1}\) regularization. 
From the results shown in Table \ref{table:MNIST}, it can be observed that network with \emph{exponential gating layer} on different regularizers yield more narrow models than the previous method with lower test errors.

\begin{table}[t!]
\caption{Comparing pruning results of architecture \emph{LeNet-5-Caffe 20-50-800-500} on MNIST dataset from different regularizers like \(\ell_{0}\) from \cite{louizos2017learning} and \(\ell_{1}\), \(\ell_{2}\), Bounded-\(\ell_{1}\) on the network with \emph{exponential gating layers}. We show the resulting architectures obtained from different pruning methods and their test error rate. It can be seen that our architectures are narrower than the one from previous method with comparable or smaller test error rates.}
\vskip -0.05in
\label{table:MNIST}
\begin{center}
\begin{small}
\begin{tabular}{l l l l l l l l l}
\toprule
Method & & & & Pruned architecture & & & Error(\(\%\)) &  \\
\midrule 
\addlinespace
\(\ell_{0}\), \cite{louizos2017learning} & & & &  20-25-45-462 & & & 0.9 \\
\addlinespace
\(\ell_{0}\), \cite{louizos2017learning} & & & &  9-18-65-25 & & & 1.0 \\
\addlinespace
\(\ell_{2}\), \(\lambda_2\) = 5e-4 & & & &  8-19-117-24 & & & 0.79 \\
\addlinespace
\(\ell_{1}\), \(\lambda_1\) = 1e-3 & & & &  8-13-37-25 & & & 0.98 \\
\addlinespace
bounded-\(\ell_{1}\), \(\lambda_1\) = 4e-3 & & & &  9-17-43-25 & & & 0.92 \\
\addlinespace
bounded-\(\ell_{1}\), \(\lambda_1\) = 3e-3 & & & &  9-20-54-27 & & & 0.67 \\

\bottomrule
\end{tabular}
\end{small}
\end{center}
\end{table}

\textbf{ImageNet} We also present pruning results of ResNet-50 and MobileNetV2 for the ImageNet dataset. 
On ResNet-50, we primarily investigate the significance of the \emph{exponential gating layer} with \(\ell_{1}\) and \(\ell_{2}\) regularization. 
From Figure \ref{fig:Resnet50_plot}, it can be seen that the \emph{exponential gating layer} combined with the \(\ell_{2}\) regularizer outperforms ResNet-101(v1) from~\cite{ye2018rethinking} in terms of pruning rate and accuracy. 
\(\ell_{1}\) regularization further penalizes the gating parameters and achieves 39\% and 73\% sparsity in the network with a drop of 1.3\% and 5.4\% Top-1 accuracy respectively at different regularization strengths. 
We compare these results and show that our method prunes more parameters than the previous pruning methods with the same line of accuracy.

\begin{figure}[t!] 
	\begin{center}
	\includegraphics[width=0.7\linewidth, height=0.3\textheight]{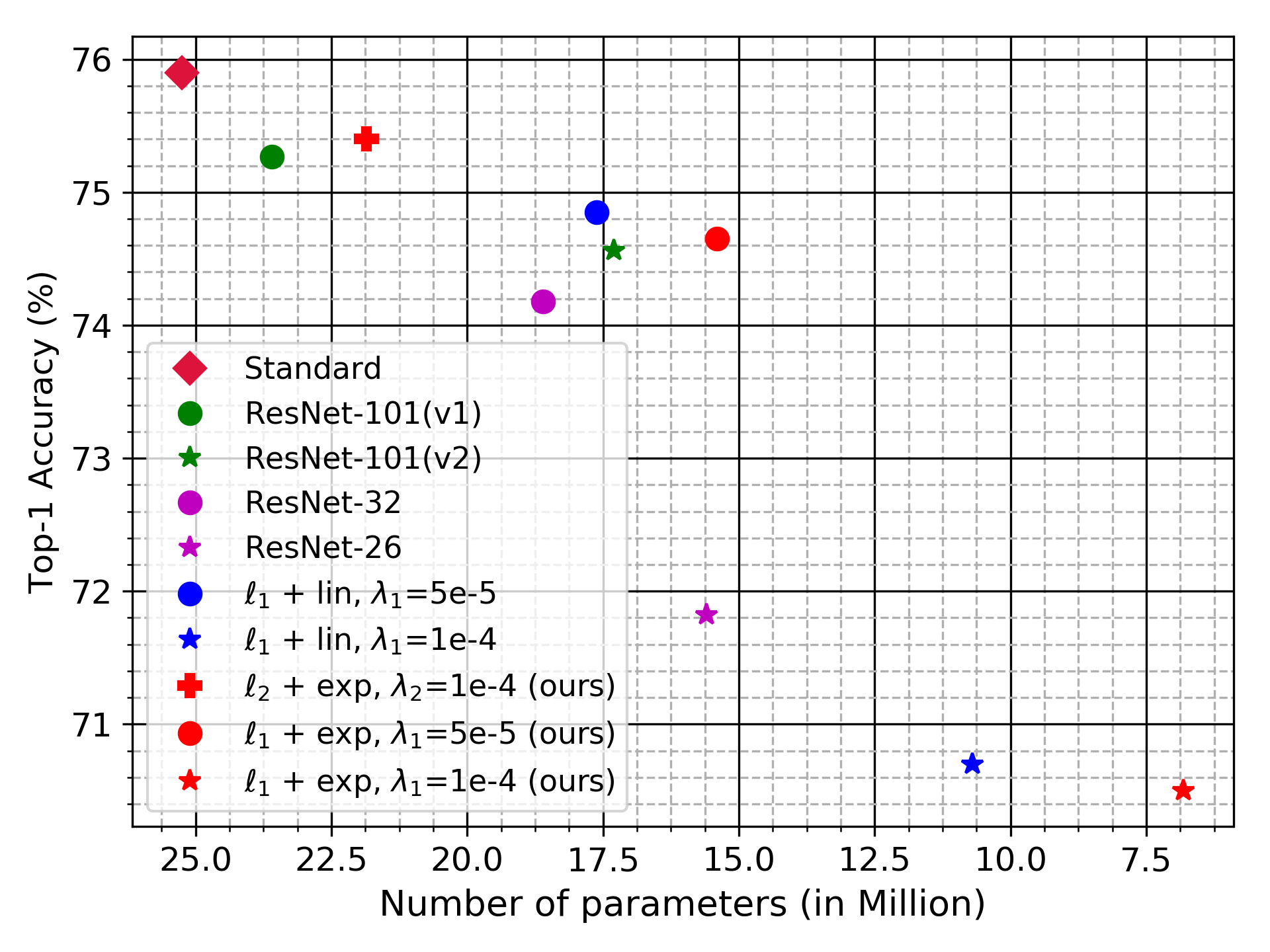}
	\caption{Comparing our pruning results of ResNet-50 (\(\ell_{2}\) \& \(\ell_{1}\) on \emph{exponential gating layer}, \(\ell_{2}\) + $exp$ and \(\ell_{1}\) + $exp$) on ImageNet dataset against the previous methods like ResNet-101(v1) and ResNet-101(v2) from ~\cite{ye2018rethinking}, ResNet-32 and ResNet-26 which are obtained from block pruning on ResNet-50 ~\cite{huang2017data}. We also compared our results against the method \emph{\(\ell_1\) on linear gate} (\(\ell_{1}\) + $lin$) from \cite{liu2017learning} by implementing it on ResNet-50. Here, 'Standard' refers to the baseline model without pruning. We show the trade-off between top1-accuracy and number of remaining parameters of the network from different methods. It can be seen that the network from \(\ell_{2}\) + $exp$ has about similar accuracy as ResNet-101(v1) but prunes 1.75M more parameters than the latter. Similarly, network from \(\ell_{1}\) + $exp$ with \(\lambda_{1}=5e\)-$5$ and $1e$-$4$ prunes 2M and 3.5M more parameters respectively and has comparable accuracy with the other methods.}
	\label{fig:Resnet50_plot}
\end{center}
\end{figure}
\begin{table}[t!]
\caption{Results on MobileNetV2 trained for \(100\) epochs on ImageNet. \emph{Bounded-\(\ell_1\) on linear gate} achieves higher accuracy than \emph{\(\ell_1\) on linear gate} and closer to the standard training with reduced number of parameters. On the other hand, exponential gate with \(\ell_1\) regularizer reduces number of parameters without a significant drop in accuracy and improves accuracy when combined with \(\ell_2\) regularizer. Here M stands for Millions.}
\vskip 0.1in
\begin{small}

\begin{tabular}{lllllllll}
\toprule

Network- MobileNetV2  & & Top-1 \% & & Top-5 \% & & \#Params & & \#FLOPS\\
\midrule 
\addlinespace
Standard training \(\lambda_2\)=1e-5 & & 70.1 & & 89.25 & & 3.56 M  & & 320.2 M  \\
\addlinespace
\(\ell_1+ lin\), \(\lambda_1\) = 5e-5, \(\lambda_2\) = 1e-5 & & 69.54 & & 89.14 & & 3.37 M & & 275.0 M  \\
\addlinespace
bounded-\(\ell_1+ lin\), \(\lambda_1\) = 5e-5, \(\lambda_2\) = 1e-5 & & 69.9 & & 89.17 & & 3.40 M & & 280.0 M  \\
\addlinespace
\(\ell_2+ exp\), \(\lambda_2\) = 4e-5 & & \textbf{70.7} & & 90.0 &  & 3.56 M & & 312.8 M  \\
\addlinespace
\(\ell_1+ exp\), \(\lambda_1\) = 5e-5, \(\lambda_2\) = 4e-5 & & 69.9 & & 89.438 & & \textbf{3.00} M & & 280.0 M  \\

\bottomrule
\end{tabular}
\end{small}
\label{table:mobilenet_imagenet}
\end{table}

On MobileNetV2, we compare \(\ell_1\) against bounded-\(\ell_1\) on linear gate and \(\ell_1\) against \(\ell_2\) on exponential gate. 
From Table \ref{table:mobilenet_imagenet}, it can be observed that the \emph{bounded-\(\ell_1\) on linear gate} achieves higher accuracy than its counterpart \emph{\(\ell_1\) on linear gate} with a slightly higher number of parameters. 
On the other hand, \(\ell_1\) penalty on exponential gate prunes a larger number of parameters and approximately keeps the accuracy of standard training whereas \(\ell_2\) on exponential gate improves the Top-1 accuracy by 0.6\%.

	\section{Conclusion}
\label{sec:Conclusion}
In this work, we propose a straightforward and easy to implement novel approach for group-wise pruning of DNNs. We introduce \emph{exponential gating layers}, which learn importance of the channels during training and drive the insignificant channels to become exactly zero. Additionally, we propose \emph{bounded-\(\ell_1\)} regularization to penalize the gate parameters based on their magnitude. Different combinations of these techniques (gating functions and regularizers) are evaluated for a set of common DNN architectures for image classification. We found that the combination of exponential gating function with an \(\ell_1\) or its bounded variant is superior than the other approaches (cf. Fig. \ref{fig:figure_full_comparison}). Finally, these techniques result in higher compression rates with accuracy comparable to existing pruning approaches on ImageNet (cf. Fig. \ref{fig:Resnet50_plot}).

	\clearpage

	\bibliographystyle{splncs04}
	\bibliography{bibtex_references}

\begin{thebibliography}{10}
\providecommand{\url}[1]{\texttt{#1}}
\providecommand{\urlprefix}{URL }
\providecommand{\doi}[1]{https://doi.org/#1}

\bibitem{achterhold2018variational}
Achterhold, J., Koehler, J.M., Schmeink, A., Genewein, T.: Variational network
  quantization. ICLR2018  (2018)

\bibitem{alvarez2016learning}
Alvarez, J.M., Salzmann, M.: Learning the number of neurons in deep networks.
  In: Adv. in Neural Info. Process. Syst. (NIPS). pp. 2270--2278 (2016)

\bibitem{chen2015compressing}
Chen, W., Wilson, J., Tyree, S., Weinberger, K., Chen, Y.: Compressing neural
  networks with the hashing trick. Int. Conf. on Machine Learning (ICML) pp.
  2285--2294 (2015)

\bibitem{cheng2017survey}
Cheng, Y., Wang, D., Zhou, P., Zhang, T.: A survey of model compression and
  acceleration for deep neural networks. arXiv:1710.09282  (2017)

\bibitem{courbariaux2016binarized}
Courbariaux, M., Hubara, I., Soudry, D., El-Yaniv, R., Bengio, Y.: Binarized
  neural networks: Training deep neural networks with weights and activations
  constrained to + 1 or -1. arXiv:1602.02830  (2016)

\bibitem{federici2017improved}
Federici, M., Ullrich, K., Welling, M.: Improved {B}ayesian compression.
  arXiv:1711.06494  (2017)

\bibitem{frankle2018lottery}
Frankle, J., Carbin, M.: The lottery ticket hypothesis: Finding small,
  trainable neural networks. arXiv:1803.03635  (2018)

\bibitem{ghosh2018structured}
Ghosh, S., Yao, J., Doshi-Velez, F.: Structured variational learning of
  {B}ayesian neural networks with horseshoe priors. arXiv:1806.05975  (2018)

\bibitem{gong2014compressing}
Gong, Y., Liu, L., Yang, M., Bourdev, L.: Compressing deep convolutional
  networks using vector quantization. arXiv:1412.6115  (2014)

\bibitem{guo2016dynamic}
Guo, Y., Yao, A., Chen, Y.: Dynamic network surgery for efficient dnns. Adv. in
  Neural Info. Process. Syst. (NIPS) pp. 1379--1387 (2016)

\bibitem{gysel2018ristretto}
Gysel, P., Pimentel, J., Motamedi, M., Ghiasi, S.: Ristretto: A framework for
  empirical study of resource-efficient inference in convolutional neural
  networks. IEEE Trans. on Neural Networks and Learning Syst.  (2018)

\bibitem{han2016deep}
Han, S., Mao, H., Dally, W.J.: Deep compression: Compressing deep neural
  networks with pruning, trained quantization and {H}uffman coding. In: Int.
  Conf. on Learning Representations (ICLR) (2016)

\bibitem{han2017dsd}
Han, S., Pool, J., Narang, S., Mao, H., Tang, S., Elsen, E., Catanzaro, B.,
  Tran, J., Dally, W.J.: Dsd: Regularizing deep neural networks with
  dense-sparse-dense training flow. Int. Conf. on Learning Representations
  (ICLR)  (2017)

\bibitem{han2015learning}
Han, S., Pool, J., Tran, J., Dally, W.: Learning both weights and connections
  for efficient neural network. Adv. in Neural Info. Process. Syst. (NIPS) pp.
  1135--1143 (2015)

\bibitem{hanson1989comparing}
Hanson, S.J., Pratt, L.Y.: Comparing biases for minimal network construction
  with back-propagation. Adv. in Neural Info. Process. Syst. (NIPS) pp.
  177--185 (1989)

\bibitem{he2017channel}
He, Y., Zhang, X., Sun, J.: Channel pruning for accelerating very deep neural
  networks. In: International Conference on Computer Vision (ICCV). vol.~2
  (2017)

\bibitem{howard2017mobilenets}
Howard, A.G., Zhu, M., Chen, B., Kalenichenko, D., Wang, W., Weyand, T.,
  Andreetto, M., Adam, H.: Mobilenets: Efficient convolutional neural networks
  for mobile vision applications. arXiv:1704.04861  (2017)

\bibitem{hu2016network}
Hu, H., Peng, R., Tai, Y.W., Tang, C.K.: Network trimming: A data-driven neuron
  pruning approach towards efficient deep architectures. arXiv:1607.03250
  (2016)

\bibitem{huang2017data}
Huang, Z., Wang, N.: Data-driven sparse structure selection for deep neural
  networks. arXiv:1707.01213  (2017)

\bibitem{hubara2017quantized}
Hubara, I., Courbariaux, M., Soudry, D., El-Yaniv, R., Bengio, Y.: Quantized
  neural networks: Training neural networks with low precision weights and
  activations. J. of Machine Learning Research (JMLR)  \textbf{18}(1),
  6869--6898 (2017)

\bibitem{iandola2016squeezenet}
Iandola, F.N., Han, S., Moskewicz, M.W., Ashraf, K., Dally, W.J., Keutzer, K.:
  Squeezenet: Alexnet-level accuracy with 50x fewer parameters and< 0.5 mb
  model size. arXiv:1602.07360  (2016)

\bibitem{karaletsos2015automatic}
Karaletsos, T., R{\"a}tsch, G.: Automatic relevance determination for deep
  generative models. arXiv:1505.07765  (2015)

\bibitem{li2016pruning}
Li, H., Kadav, A., Durdanovic, I., Samet, H., Graf, H.P.: Pruning filters for
  efficient convnets. Int. Conf. on Learning Representations (ICLR)  (2017)

\bibitem{liu2017learning}
Liu, Z., Li, J., Shen, Z., Huang, G., Yan, S., Zhang, C.: Learning efficient
  convolutional networks through network slimming. In: Computer Vision (ICCV),
  2017 IEEE International Conference on. pp. 2755--2763. IEEE (2017)

\bibitem{louizos2017bayesian}
Louizos, C., Ullrich, K., Welling, M.: {B}ayesian compression for deep
  learning. Advances in Neural Information Processing Systems  (2017)

\bibitem{louizos2017learning}
Louizos, C., Welling, M., Kingma, D.P.: Learning sparse neural networks through
  l0 regularization. ICLR 2018  (2018)

\bibitem{luo2017thinet}
Luo, J.H., Wu, J., Lin, W.: Thinet: A filter level pruning method for deep
  neural network compression. ICCV2017  (2017)

\bibitem{mackay1995probable}
MacKay, D.J.: Probable networks and plausible predictions - a review of
  practical {B}ayesian methods for supervised neural networks. Network:
  Computation in Neural Systems  \textbf{6}(3),  469--505 (1995)

\bibitem{molchanov2017variational}
Molchanov, D., Ashukha, A., Vetrov, D.: Variational dropout sparsifies deep
  neural networks. ICML 2017  (2017)

\bibitem{molchanov2016pruning}
Molchanov, P., Tyree, S., Karras, T., Aila, T., Kautz, J.: Pruning
  convolutional neural networks for resource efficient inference. ICLR2017
  (2017)

\bibitem{nealbayesian}
Neal, R.M.: {B}ayesian Learning for Neural Networks. Ph.D. thesis, University
  of Toronto (1995)

\bibitem{neklyudov2017structured}
Neklyudov, K., Molchanov, D., Ashukha, A., Vetrov, D.: Structured {B}ayesian
  pruning via log-normal multiplicative noise. arXiv:1705.07283  (2017)

\bibitem{rastegari2016xnor}
Rastegari, M., Ordonez, V., Redmon, J., Farhadi, A.: Xnor-net: Imagenet
  classification using binary convolutional neural networks. In: European
  Conference on Computer Vision. pp. 525--542. Springer (2016)

\bibitem{sze2017efficient}
Sze, V., Chen, Y.H., Yang, T.J., Emer, J.: Efficient processing of deep neural
  networks: A tutorial and survey. arXiv:1703.09039  (2017)

\bibitem{ullrich2017soft}
Ullrich, K., Meeds, E., Welling, M.: Soft weight-sharing for neural network
  compression. ICLR 2017  (2017)

\bibitem{wen2016learning}
Wen, W., Wu, C., Wang, Y., Chen, Y., Li, H.: Learning structured sparsity in
  deep neural networks. In: Advances in Neural Information Processing Systems.
  pp. 2074--2082 (2016)

\bibitem{Weston2003}
Weston, J., Elisseeff, A., Sch{\"o}lkopf, B., Tipping, M.: Use of the zero-norm
  with linear models and kernel methods. J. of Machine Learning Research (JMLR)

\bibitem{wu2018training}
Wu, S., Li, G., Chen, F., Shi, L.: Training and inference with integers in deep
  neural networks. arXiv:1802.04680  (2018)

\bibitem{ye2018rethinking}
Ye, J., Lu, X., Lin, Z., Wang, J.Z.: Rethinking the
  smaller-norm-less-informative assumption in channel pruning of convolution
  layers. arXiv:1802.00124  (2018)

\bibitem{zhou2017incremental}
Zhou, A., Yao, A., Guo, Y., Xu, L., Chen, Y.: Incremental network quantization:
  Towards lossless cnns with low-precision weights. arXiv:1702.03044  (2017)

\bibitem{zhou2016less}
Zhou, H., Alvarez, J.M., Porikli, F.: Less is more: Towards compact cnns. In:
  European Conference on Computer Vision. pp. 662--677. Springer (2016)

\end{thebibliography}
	\clearpage

	\thispagestyle{empty}

\begin{center}
	{\rule{\textwidth}{0.4pt}}
	\large\text{}\\
	\Large\textbf{Group Pruning using a Bounded-\(\ell_p\) norm for Group Gating and Regularization}\\
	\normalsize\text{}\\
	{\rule{\textwidth}{0.4pt}}
	\large\text{}\\
	\vspace{2ex}
\end{center}
\section*{Supplementary material}

\renewcommand{\thetable}{A\arabic{table}}
\renewcommand{\thefigure}{A\arabic{figure}}
\renewcommand{\thesection}{A\arabic{section}}
\setcounter{table}{0}
\setcounter{figure}{0}
\setcounter{section}{0}
\setcounter{equation}{0}
\setcounter{page}{1}

\section{Proof of Lemma 1 (\Cref{lemma1}):}\label{proof:lemma1}

To improve readability, we will restate Lemma 1 from the main text:

The mapping \(\|.\|_{\text{bound-}p,\sigma}\) has the following properties:

\begin{itemize}
	\item For \(\sigma \rightarrow 0^{+}\) the bounded-norm converges towards the \(0\)-norm:
	\begin{equation}\label{proof:is0norm}
		\lim\limits_{\sigma \rightarrow 0^{+}} \|x\|_{\text{bound-}p,\sigma} = \|x\|_{0}.
	\end{equation}
	\item In case \(|x_i| \approx 0\) for all coefficients of \(x\), the bounded-norm of \(x\) is approximately equal to the \(p\)-norm of \(x\) weighted by \(1/\sigma\):
	\begin{equation}\label{proof:ispnorm}
		\|x\|_{\text{bound-}p,\sigma} \approx \left\lVert\frac{x}{\sigma}\right\rVert_{p}^{p}
	\end{equation}
\end{itemize}

\paragraph{Proof:} The first statement \Cref{proof:is0norm} can easily be seen using:
\[
	\lim\limits_{\sigma \rightarrow 0} \text{exp}(-\frac{|x_i|^p}{\sigma^p}) = \textbf{1}_0(x_i)
\]
For the second statement \Cref{proof:ispnorm} we use the taylor expansion of \(\text{exp}\) around zero to get:
\begin{equation}
\begin{split}
	\|x\|_{\text{bound-}p,\sigma} & = \sum\limits_{i=1}^{n} 1 - \text{exp}\left(-\frac{|x_i|^p}{\sigma^p}\right) \\
								 & = \sum\limits_{i=1}^{n} 1 - \sum\limits_{j=0}^{\infty}\left(-\frac{|x_i|^p}{\sigma^p}\right)^j \frac{1}{j!}
\end{split}
\end{equation}
For \(|x_i| \approx 0\) we keep only the leading coefficient \(j = 1\) yielding:
\[
	\|x\|_{\text{bound-}p,\sigma} \approx \sum\limits_{i=1}^{n} \frac{|x_i|^p}{\sigma^p} = \left\lVert\frac{x}{\sigma}\right\rVert_{p}^{p}.
\]

\section{Experiment Details}\label{sec:experiments}

Both CIFAR100 and ImageNet datasets are augmented with standard techniques like random horizontal flip and random crop of the zero-padded input image and further processed with mean-std normalization. The architecture MobileNetV2 is originally designed for the task of classification on ImageNet dataset. We adapt the network\footnote{We changed the average pooling kernel size from \(7\times7\) to \(4\times4\) and the stride from 2 to 1 in the first convolutional layer and also in the second block of bottleneck structure of the network.} to fit the input resolution \(32\times32\) of CIFAR100. ResNet-164 is a pre-activation ResNet architecture containing 164 layers with bottleneck structure while DenseNet with 40 layer network and growth rate 12 has been used. All the networks are trained from scratch (weights with random initialization and bias is disabled for all the convolutional and fully connected layers) with a hypeparameter search on regularization strengths \(\lambda_1\) for \(\ell_1\) or bounded-\(\ell_1\) regularizers and weight decay \(\lambda_2\) on each dataset. The scaling factor $\gamma$ of BN is initialized with 1.0 in case of \emph{exponential gate} while it is initialized with 0.5 for \emph{linear gate} as described in \cite{liu2017learning} and bias $\beta$ to be zero. The hyperparameter $\sigma$ in bounded-\(\ell_1\) regularizer is set to be $1.0$ when the scheduling of this parameter is disabled. All the gating parameters $g$ are initialized with $1.0$.

We use the standard categorical cross-entropy loss and an additional penalty is added to the loss objective in the form of weight decay and sparsity induced \(\ell_1\) or bounded-\(\ell_1\) regularizers.
Note that \(\ell_1\) and bounded-\(\ell_1\) regularization acts only on the gating parameters $g$ whereas weight decay regularizes all the network parameters including the gating parameters $g$.
We reimplemented the technique proposed in ~\cite{liu2017learning} which impose \(\ell_1\) regularization on scaling factor $\gamma$ of Batch Normalization layers to induce channel level sparsity. We refer this method as \emph{\(\ell_1\) on linear gate} and compare it against our methods \emph{bounded-\(\ell_1\) on linear gate}, \emph{\(\ell_1\) on exponential gate} and \emph{bounded-\(\ell_1\) on exponential gate}. We train ResNet-164, DenseNet-40 and ResNet-50 for \(240\), \(130\) and \(100\) epochs respectively. Furthermore, learning rate of ResNet-164, DenseNet-40 and ResNet-50 is dropped by a factor of \(10\) after \((30, 60, 90)\), \((120, 200, 220)\), \((100, 110, 120)\) epochs. The networks are trained with batch size 128 using the SGD optimizer with initial learning rate 0.1 and momentum 0.9 unless specified. Below, we present the training details of each architecture individually.
\\
\newline
\emph{LeNet5-Caffe}: 
Since this architecture does not contain Batch Normalization layers, we do not compare our results with the method \emph{\(\ell_1\) on linear gate}. We train the network with \emph{exponential gating layers} that are added after every convolution/fully connected layer except the output layer and apply different regularizers like \(\ell_1\), bounded-\(\ell_1\) and weight decay separately to evaluate their pruning results. We set the weight decay to zero when training with \(\ell_1\) or bounded-\(\ell_1\) regularizers. The network is trained for 200 epochs with the weight decay and 60 epochs in case of other regularizers.\\
\newline
\emph{ResNet-50}: We train the network with \emph{exponential gating layers} that are added after every convolutional layer on ImageNet dataset. We evaluate performance of the network on different values of regularization strength $\lambda_1$ like $10^{-5}$, $5\times10^{-5}$ and $10^{-4}$. The weight decay $\lambda_2$ is enabled for all the settings of $\lambda_1$ and set to be $10^{-4}$. We analyzed the influence of \emph{exponential gate} and compared against the existing methods. \\
\newline
\emph{ResNet-164}: We use a dropout rate of \(0.1\) after the first Batch Normalization layer in every Bottleneck structure. Here, every convolutional layer in the network is followed by an \emph{exponential gating layer}.\\
\newline
\emph{DenseNet-40}: We use a dropout of 0.05 after every convolutional layer in the Dense block. Here, the \emph{exponential gating layer} is added after every convolutional layer in the network except the first convolutional layer.\\
\newline 
\emph{MobileNetV2}: 
On CIFAR100, we train the network for 240 epochs where learning rate drops by 0.1 at 200 and 220 epochs. A dropout of 0.3 is applied after the global average pooling layer. On ImageNet, we train this network for 100 epochs which is in contrast to the standard training of 400 epochs. We start with learning rate 0.045 and reduced it by 0.1 at 30, 60 and 90 epochs.
We evaluate performance of the network on \emph{exponential gate} over the \emph{linear gate} with \(\ell_1\) regularizer and also tested the significance of bounded-\(\ell_1\) on \emph{linear gate}. \emph{Exponential gating layer} is added after every standard convolutional/depthwise separable convolutional layer in the network.

On CIFAR100, we investigate the influence of weight decay, \(\ell_1\) and bounded-\(\ell_1\) regularizers, the role of \emph{linear} and \emph{exponential gates} on every architecture. We also study the influence of scheduling $\sigma$ in both  \(\ell_1\) and bounded-\(\ell_1\) regularizers on this dataset. For MobileNetV2, we initialize $\sigma$ with $2.0$ and decay it at a rate of $0.99$ after every epoch. In case of ResNet-164 and DenseNet-40, we initialize the hyperparameters $\lambda_1$ and $\lambda_2$ with $10^{-4}$ and $5\times10^{-4}$ respectively and $\sigma$ with $2.0$. We increase the $\lambda_1$ to $5\times10^{-4}$ after 120 epochs and $\sigma$ drops by 0.02 after every epoch until the value of $\sigma$ reaches to 0.2 and later decays at a rate of 0.99.

\end{document}